\documentclass[letterpaper, 10 pt, conference]{ieeeconf}  

\IEEEoverridecommandlockouts                              

\usepackage{amsfonts, dsfont, mathtools, amsmath, amssymb, bbold, mathrsfs, bm, upgreek}
\usepackage{graphicx, float, epsfig, color, psfrag, adjustbox, enumerate}
\usepackage[font=small]{caption}
\usepackage{refcount, url, cleveref, soul}
\usepackage[noadjust]{cite}
\usepackage[usenames,dvipsnames,svgnames,table]{xcolor}
\usepackage{algorithm, algpseudocode}
\usepackage{booktabs}

\algrenewcommand\alglinenumber[1]{#1:}

\title{\LARGE \bf
A bioinspired internal model-based online estimator for planar pursuit
}

\author{Tengyue Liu$^{1}$, Xincheng Li$^{1}$, Sofia Morales Ferreira$^{1}$, Kevin S. Galloway$^{2}$, Udit Halder$^{1}$
\thanks{$^{1}$Department of Mechanical and Aerospace Engineering, University of South Florida, Tampa, FL, $^{2}$Department of Electrical and Computer Engineering, United States Naval Academy, Annapolis, MD.
  Corresponding e-mail:  {\tt\small udithalder@usf.edu}}%
}
\def\R{{\mathds{R}}}

\def\0{{\mathbb{0}}}
\def\1{{\mathds{1}}}

\newcommand{\norm}[1]{\left\lVert#1\right\rVert}

\definecolor{db}{RGB}{23,20,119}
\definecolor{dg}{RGB}{2,101,15}

\newtheorem{proposition}{Proposition}[section]

\newtheorem{remark}{Remark}

\usepackage{upgreek}
\newcommand{\dif}{\mathrm{d}}

\newcommand{\material}[1]{
	\ifthenelse{\equal{#1}{\kappa}}{\upkappa}{
	\ifthenelse{\equal{#1}{\nu}}{\upnu}{
	\ifthenelse{\equal{#1}{\omega}}{\upomega}{
	\ifthenelse{\equal{#1}{\sigma}}{\upsigma}{
	\ifthenelse{\equal{#1}{\theta}}{\uptheta}{
	\mathsf{#1}}}}}}
}

\graphicspath{{figures/}}

\begin{document}
\bstctlcite{BSTcontrol} 
\maketitle
\thispagestyle{empty}
\pagestyle{empty}


\begin{abstract}
Bioinspired feedback controls for pursuit, tracking, and collective motion are often expressed in terms of the relative configuration between interacting agents. In practice, however, onboard sensors may not directly provide all quantities required for feedback control, necessitating estimation of unobserved quantities. 
This paper develops a bioinspired internal model-based estimator for reconstructing those quantities from partial sensory observations and known self-motion.
State reconstruction is posed as an optimization problem that treats the relative kinematics as constraints and minimizes the disagreement between the internal model outputs and measurements from onboard sensors.  
Pontryagin's Maximum Principle is used to derive the necessary optimality conditions. A forward-backward algorithm is used to provide a numerical solution and a moving horizon formulation is employed for online implementation. The estimator is evaluated numerically against classical state estimators. 
Real-time implementation of the proposed framework on robotic hardware is demonstrated through two pursuit strategies.
\end{abstract}

\begin{keywords}
	nonlinear estimation, optimal control, geometric control, bioinspired robotics
\end{keywords}

\section{Introduction} \label{sec:intro}
Pursuit and interception of prey are prominently observed in natural predation, including that of falcons, dragonflies, and bats~\cite{mizutani2003motion, ghose2006echolocating, brighton2017terminal}. Analysis of the predator trajectories has revealed remarkable strategies employed by these animals. Examples include constant bearing strategy that maintains an approximately fixed inertial line of sight to the prey~\cite{galloway2013symmetry} and motion camouflage, where the predator moves so as to reduce their apparent transverse motion with respect to the prey~\cite{glendinning2004mathematics}.

These biological pursuit strategies have inspired a substantial body of work on engineered pursuit and interception systems, including motion camouflage~\cite{justh2006steering, mischiati2012dynamics, halder2015biomimetic}, constant bearing pursuit~\cite{galloway2013symmetry}, beacon-referenced pursuit~\cite{halder2016steering}, and leader-follower tracking~\cite{li2026feedback}. In these approaches, feedback control is constructed using geometric quantities such as relative position,
bearing, and prey turning rate, with the underlying assumption that these quantities are available to the pursuer either directly through onboard sensing or through communication. In practical settings, however, this assumption may not hold, requiring the necessary quantities to be reconstructed from the available measurements.

This reconstruction problem is closely related to nonlinear state estimation. Classical approaches such as the extended Kalman filter propagate a process model and correct the resulting prediction using incoming measurements \cite{ljung1979filter}, while geometric approaches exploit symmetries of the configuration space, including nonlinear complementary filters \cite{mahony2008nonlinear}, symmetry-preserving observers \cite{bonnabel2008symmetry}, invariant extended Kalman filters \cite{barrau2017invariant}, and equivariant observers \cite{serrano2025bearing}. Unknown pursuee controls may be included in such estimators through state augmentation, but doing so requires an assumed evolution model or prior. An alternative is to reconstruct the state trajectory through dynamic optimization, leading to moving horizon estimators \cite{robertson1996moving,rao2003constrained}. 

The approach taken in this paper to address the estimation problem is motivated by another idea from biological sensorimotor control: the use of {\it internal models}. The internal model hypothesis proposes that nervous systems construct internal representations of the body and its environment that are used to interpret sensory information in the context of predicted motion and guide own behavior accordingly~\cite{wolpert1995internal, mcnamee2019internal, huang2018internal}. Motivated by this idea, we construct within the pursuing agent an internal model of its target. 
The known relative kinematics constrain the evolution of this model, allowing the reconstruction problem to be posed as an optimization problem in which the internal trajectory is optimized to best explain the measurements. The framework is presented in the planar setting for simplicity.

\smallskip
The key contributions of this paper are as follows:
\smallskip

\noindent
1) This paper proposes an internal model-based estimator (IME) for the pursuer agent to reconstruct the full relative configuration and unknown motion commands of its target agent using partial, intermittent observations such as distance and bearing. The estimation problem is posed as an optimization problem whose necessary optimality conditions are obtained using Pontryagin's Maximum Principle (PMP)~\cite{liberzon2011calculus}. A major difference with the existing methods is IME's ability to estimate the motion controls (linear and angular speeds) of its \textit{freely} moving target.

\smallskip
\noindent
2) An iterative algorithm is proposed to solve the PMP conditions leading to optimal estimates. 
Furthermore, a moving-horizon implementation is employed for carrying out the estimation in an online fashion. 


\smallskip
\noindent
3) The efficacy of using the IME estimates for real-time implementation of 
two pursuit strategies (constant bearing and mutual motion camouflage) is demonstrated through numerical simulations and robotic experiments. 

The remainder of the paper is organized as follows: mathematical preliminaries are given in Sec.~\ref{sec:model}, our proposed estimator is presented in Sec.~\ref{sec:estimator}, numerical and robotic experiments are described in Sec.~\ref{sec:results}, and the paper is concluded in Sec.~\ref{sec:conclusion}.

\section{Preliminaries}
\label{sec:model}

\begin{figure}[t]
    \centering
    \includegraphics[width=.48\textwidth]{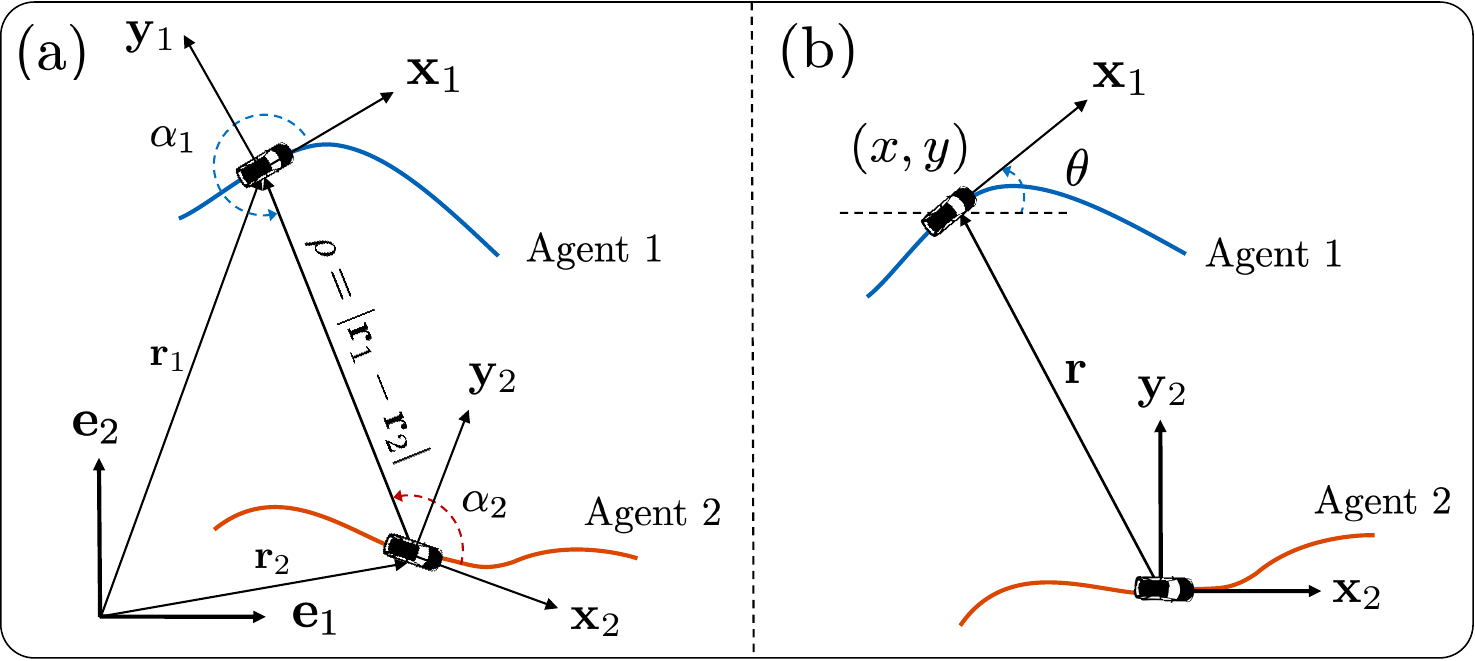}
    \caption{Two-agent formation. (a) The agents evolve in the global inertial frame $\{\mathbf e_1,\mathbf e_2\}$, with their relative configuration described by the shape variables $(\rho,\alpha_1,\alpha_2)$. (b) In the body-fixed frame $\{\mathbf x_2,\mathbf y_2\}$ attached to agent 2, the relative configuration of agent~1 is described by the coordinates $(x,y,\theta)$.}
    \vspace*{-15pt}
    \label{fig:formation}
\end{figure}

\subsection{Agent model}

Consider two agents moving in the plane spanned by the global inertial frame $\{\mathbf e_1,\mathbf e_2\}$ (see Fig.~\ref{fig:formation}(a)). Agent $i$ has global position $\mathbf r_i= x_i \, \mathbf{e}_1 + y_i \, \mathbf{e}_2$   
and heading $\theta_i$ that describes the moving frames $\mathbf x_i = \cos\theta_i \, \mathbf{e}_1 + \sin\theta_i \, \mathbf{e}_2$ and $\mathbf y_i = -\sin\theta_i \, \mathbf{e}_1 + \cos\theta_i \, \mathbf{e}_2$.  
We model each agent as a unicycle~\cite{bishop1975there, lavalle2006planning} as follows
\begin{equation}
\begin{aligned}
    \dot x_i = v_i\cos\theta_i, ~~
    \dot y_i = v_i\sin\theta_i, ~~
    \dot\theta_i = u_i, ~~ i \in \{1, 2\}
\end{aligned}
\label{eq:unicycle_dynamics}
\end{equation}
where $v_i(t)$ and $u_i(t)$ denote the translational and angular speeds (control inputs to the agents), respectively.

Unless otherwise specified, we will consider agent 2 to be the pursuer and agent 1 the pursuee. 


\subsection{Relative motion (plant model)}


For various pursuit strategies, the absolute position and orientation of the agents are often less important than their configuration relative to one another~\cite{galloway2013symmetry, halder2016steering, justh2006steering}. Therefore, let us denote the position and heading of the pursuee in the body-fixed frame of the pursuer by $\mathbf{r} :=  x \, \mathbf{x}_2 + y \, \mathbf{y}_2 = \mathbf{r}_{1} - \mathbf{r}_2$  
and $\theta$, respectively (see Fig.~\ref{fig:formation}(b)). 
We calculate
\begin{equation}
    \begin{bmatrix}
        x\\
        y
    \end{bmatrix}
    =
    \begin{bmatrix*}[r]
        \cos\theta_2 & \sin\theta_2\\
        -\sin\theta_2 & \cos\theta_2
    \end{bmatrix*}
    \begin{bmatrix}
        x_1-x_2\\
        y_1-y_2
    \end{bmatrix},
    \quad
    \theta=\theta_1-\theta_2
\label{eq:relative_coordinates}
\end{equation}

\noindent
{\it Relative state dynamics.}
Let us denote the 
\textit{relative state} as $z:=[x, \; y, \; \theta]^\top \in \R^3$, whose dynamics are obtained by taking derivatives of~\eqref{eq:relative_coordinates} and using \eqref{eq:unicycle_dynamics}
\begin{equation}
    \dot z = f(z, \eta_1, \eta_2) :=
    \begin{bmatrix}
        v_1\cos\theta-v_2+u_2y\\
        v_1\sin\theta-u_2x\\
        u_1-u_2
    \end{bmatrix}
\label{eq:relative_dynamics}
\end{equation}
where $\eta_i = [v_i, \; u_i]^\top$ represents the controls of agent~$i \in \{1, 2\}$. The dynamics~\eqref{eq:relative_dynamics} are regarded as the plant model (ground truth).

\noindent
{\it Observation model.}
We assume that the pursuer has access only to the relative distance and bearing to the pursuee. Such information may be inferred from onboard sensors, e.g., LiDARs. Therefore, we may treat the relative position $(x,y)$ as the output or observation of the plant model
\begin{equation}
    \mathsf{y}=h(z)=
    \begin{bmatrix}
        x\\
        y
    \end{bmatrix}
\label{eq:continuous_output}
\end{equation}

\begin{remark}
    The relative coordinates $(x,y,\theta)$ may also be interpreted as the coordinates of a relative rigid-body transformation on the special Euclidean group $SE(2)$. Let $ g_i = \left[\begin{smallmatrix} R(\theta_i) & \mathbf r_i\\
        0 & 1
    \end{smallmatrix}\right]
    \in SE(2)$ 
denote the configuration of agent $i$. Then the configuration of agent 1 as observed from agent 2 is
\begin{equation}
    g
    =
    g_2^{-1}g_1
    =
    \begin{bmatrix}
        R(\theta) & \begin{pmatrix}x \\ y\end{pmatrix} \\
        0 & 1
    \end{bmatrix}
\label{eq:relative_group_element}
\end{equation}
so that $(x,y,\theta)$ are precisely the coordinates of the relative group element $g$. This equivalent Lie-group interpretation is often used in geometric observers \cite{mahony2008nonlinear, bonnabel2008symmetry}.
\end{remark}

\subsection{Practical limitations of implementing feedback control}
An array of pursuit strategies~\cite{justh2006steering, galloway2013symmetry, halder2016steering, li2026feedback} can be expressed in a feedback form
\begin{align}
    \eta_2 = \eta_{\text{pursuit}}(z, \eta_1)
    \label{eq:pursuit_general}
\end{align}
Such feedback laws 
possess an important practical limitation: the quantities required by the controller might not all be directly available to an individual agent. For example, from the point of view of the pursuer, the heading of the pursuee ($\theta$) and its speed and steering controls ($\eta_1$) are generally unknown. 
Therefore, proper execution of any pursuit strategy of the form~\eqref{eq:pursuit_general}
requires the unknown quantities to be 
systematically estimated from observed relative motion.


\section{Internal Model-based Estimator (IME)}
\label{sec:estimator}
In this section, we propose an internal model-based estimator (IME) which uses partial observations to reconstruct the relative states and unknown controls of the pursuee required for feedback.
\subsection{Internal model and estimation}
We suppose that pursuer maintains an internal representation of the relative state $z$ and pursuee controls $\eta_1$. 
We denote these estimates by $\hat z := \begin{bmatrix}
        \hat x, &
        \hat y, &
        \hat\theta
    \end{bmatrix}^{\top}, ~
    \hat \eta_1 := \begin{bmatrix}
        \hat v_1, &
        \hat u_1
    \end{bmatrix}^{\top}$,
with internal dynamics and model outputs expressed as
\begin{subequations}
    \begin{align}
        \dot{\hat z} &= f(\hat z, \hat \eta_1, \eta_2) \label{eq:internal_model}\\
        \hat{\mathsf{y}} &= h(\hat z)  \label{eq:model_output}
    \end{align}
    \label{eq:estimate_dynamics}
\end{subequations}
The estimates $\hat{z}$ are called the \textit{internal states} and the dynamics \eqref{eq:internal_model} are regarded as the \textit{internal model} of the pursuer. Notice that since $\eta_2$ is the local control implemented by the pursuer, it enters the internal dynamics~\eqref{eq:internal_model} directly.  
A block diagram of the plant and internal models is provided in Fig.~\ref{fig:block_diagram}.


The estimation problem from the perspective of the pursuer then 
is to obtain the estimates $\hat{z}$ and $\hat{\eta}_1$ so that the difference between the internal model outputs $\hat{\mathsf{y}} = h(\hat{z})$ and observed outputs $\mathsf{y} = h(z)$ is minimized. 

A key distinction between the current problem and classical state observer formulations is that the unknown time-varying controls of the pursuee ($\eta_1 (t)$) are also being estimated here. Classical observers, such as the Luenberger observer, reconstruct unmeasured states from a known system model together with the measured system inputs and outputs \cite{luenberger1966observers}. Similarly, Kalman filtering assumes a prescribed dynamical model for the state evolution and its uncertainty \cite{kalman1960new}. Although unknown controls may be incorporated by augmenting the state, doing so requires introducing additional assumed dynamic models of those controls. 
Such a model may not be available when the pursuee is moving freely and its controls are unknown to the observer.

\begin{figure}[t]
    \centering
    \includegraphics[width=.48\textwidth, trim = {0 10pt 0 0}, clip = true]{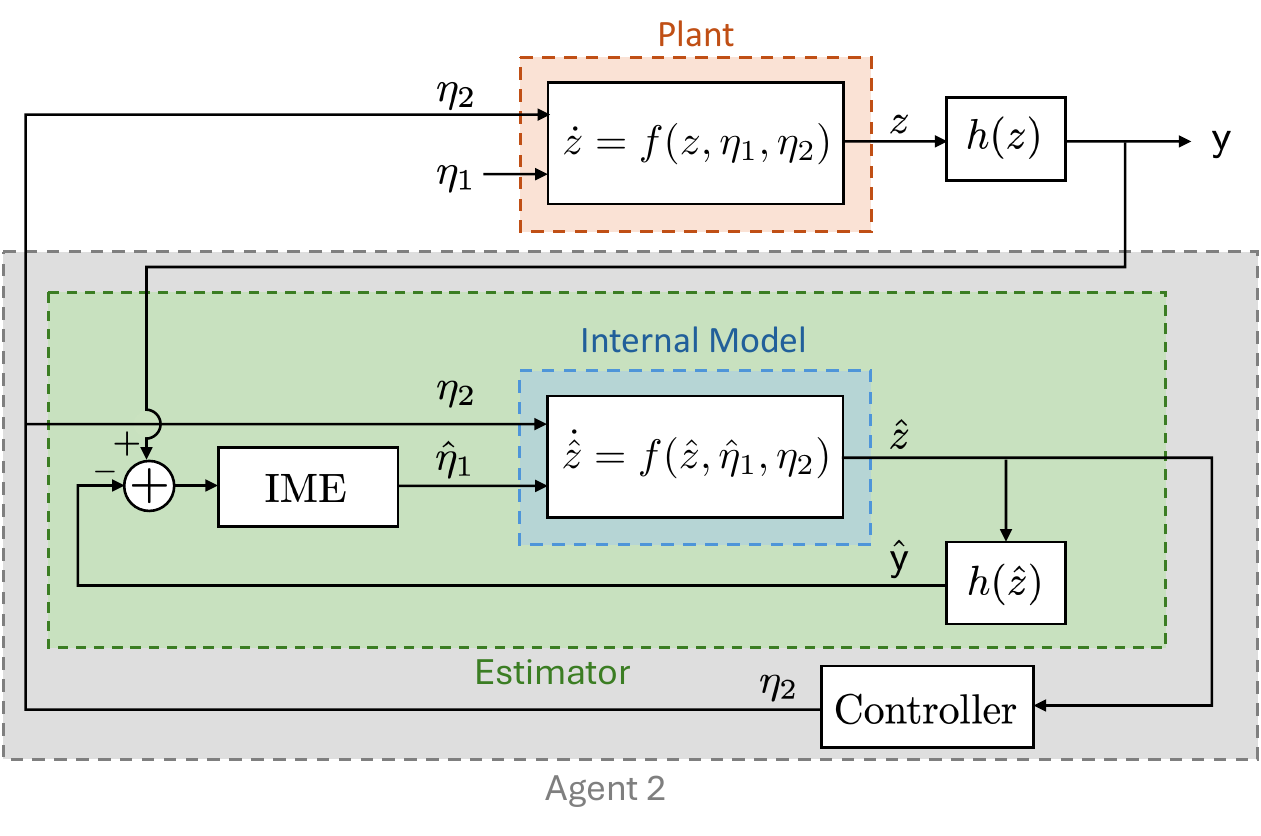}
    \caption{Closed-loop IME architecture from the perspective of agent~2.}
    \vspace*{-15pt}
    \label{fig:block_diagram}
\end{figure}



\subsection{Estimation as an optimization problem}
Consider a finite interval $[t_0,t_N]$ over which observations are available at times $t_0,t_1,\ldots,t_N$. We denote the corresponding data set by $ \mathcal{Y}
    =    \left\{ (t_k,\mathsf{y}_k)\right\}_{k=0}^{N}$, where $  \mathsf{y}_k$ denotes the observation obtained at $t_k$.

Given the observation data set $\mathcal{Y}$, the estimation of the unknown quantities in the internal model is formulated as a dynamic optimization problem, 
where the internal dynamics~\eqref{eq:estimate_dynamics} are imposed as constraints. However, since the observation data arrive at discrete time instants $t_k$, optimal solutions to this optimization problem are known to have jump discontinuities at those times $t = t_k$~\cite{kim2022physics, dey2014control}. 

In order to obtain continuous estimates of the unknown controls $\hat v_1$ and $\hat u_1$ between measurement intervals, we use their time derivatives as the new decision variables, which we denote using
$\hat\omega = [
        \hat\omega_v, ~
        \hat\omega_u
    ]^\top =
    [
        \dot{\hat v}_1, ~
        \dot{\hat u}_1
    ]^\top$.
Thus, for piecewise continuous $\hat\omega$, the reconstructed inputs $\hat v_1$ and $\hat u_1$ remain continuous over the entire estimation interval. Let us denote the set of admissible decision variables as ${\Omega} = \{\hat\omega:[t_0,t_N]\rightarrow{\R}^2 ~:~ \hat\omega ~\text{piecewise continuous}\}$.

The internal model dynamics~\eqref{eq:estimate_dynamics} are then updated as
\begin{align}
\begin{split}
    \dot{\hat z}&= f(\hat z,\hat\eta_1, \eta_2), ~~ \dot{\hat\eta}_1= \hat\omega \\
    \hat{\mathsf{y}} &= h(\hat{z})
\end{split}
\label{eq:estimated_augmented_dynamics}
\end{align}
Finally, for the data set $\mathcal{Y}$, we define the objective function
\begin{equation*}
\begin{aligned}
    J(\tilde{z}_0, \hat\omega;\mathcal{Y}) &= \frac{1}{2}\sum_{k=0}^{N} \norm{\hat{\mathsf{y}}(t_k)-\mathsf{y}_k}^2 + \frac{\chi}{2} \int_{t_0}^{t_N} \norm{\hat{\omega}}^2 \, \dif t 
\end{aligned}
\label{eq:IME_cost}
\end{equation*}
where $\tilde{z} := (\hat z, \hat{\eta}_1)$ is the augmented state with $\tilde{z}_0 = \tilde{z}(t_0)$ considered as an additional decision variable,  and $\norm{\hat{\mathsf{y}}(t_k)-\mathsf{y}_k}^2$ is the observation cost penalizing the disagreement between the internal model outputs and the measurements. The integral term regularizes the rates of change of the estimated leader inputs with a regularization parameter $\chi>0$. 

In summary, the estimation problem is formulated as:
\begin{equation}
\begin{aligned}
    \underset{\tilde{z}_0 \in \R^5,~ \hat\omega (\cdot) \in \Omega}{\text{minimize}}
    \quad& J(\tilde z_0,\hat\omega;\mathcal{Y})\\
    \mathrm{subject\ to}
    \quad& \text{internal dynamics~\eqref{eq:estimated_augmented_dynamics}}
\end{aligned}
\label{eq:optimal_estimation_problem}
\end{equation}
The necessary conditions of optimality for the problem \eqref{eq:optimal_estimation_problem} are obtained using Pontryagin's Maximum Principle (PMP)~\cite{liberzon2011calculus, pontryagin1962mathematical}, which are discussed next. 

\vspace*{-3pt}
\subsection{Necessary optimality conditions}
\label{subsec:pmp}
Let us introduce the costate $p =
    \begin{bmatrix}
        p_x, \; p_y, \; p_\theta, \; p_v, \; p_u
    \end{bmatrix}^\top \in \R^5$
associated with the augmented state $\tilde{z}$, and define the control or pre-Hamiltonian as
\begin{equation*}
    H(\tilde{z},p,\hat\omega, t) = p^{\top} \begin{bmatrix}
        f(\hat z,\hat\eta_1, \eta_2 (t)) \\
        \hat{\omega}
    \end{bmatrix}
     - \frac{\chi}{2} \norm{\hat{\omega}}^2
\label{eq:hamiltonian}
\end{equation*}
We then state the following result.

\smallskip
\begin{proposition} 
Consider the minimization problem \eqref{eq:optimal_estimation_problem}. Suppose $\hat\omega^\star$ is a minimizer and $\tilde{z}^\star = (\hat z^\star, \hat{\eta}_1^\star)$ is the corresponding augmented state trajectory. Then there exists a costate trajectory $p:[t_0,t_N]\rightarrow\R^5, p \not\equiv 0$, such that 
\begin{equation}
\small
    \dot{\tilde{z}}^\star = \nabla_p H(\tilde z^\star, p,\hat\omega^\star, t), 
    ~~
    \dot p = - \nabla_{\tilde{z}} H(\tilde z^\star, p,\hat\omega^\star, t)  
\label{eq:hamilton_equations}
\end{equation}
with jump and boundary conditions 
\begin{subequations}
\small
\begin{align}
    p(t_k^-) = p(t_k^+) - \tfrac{1}{2} \nabla_{\tilde z} \, \norm{\hat{\mathsf{y}}(\hat z^\star(t_k)) - \mathsf{y}_k}^2,&  ~k=0, \cdots, N
\label{eq:costate_jump} \\
    p(t_N^+)=0,
    \qquad
    p(t_0^-)= 0&
\label{eq:costate_boundary_conditions}
\end{align}
\label{eq:transversality_conditions}
\end{subequations}
The optimal control is obtained through the pointwise maximization of the pre-Hamiltonian
\begin{equation}
\begin{aligned}
    \hat\omega^\star
    =
    \arg\max_{w\in\R^2}
    H(\tilde z,p,w, t)
    =
    \frac{1}{\chi}
    \begin{bmatrix}
        p_v\\
        p_u
    \end{bmatrix}
\end{aligned}
\label{eq:optimal_unknown_inputs}
\end{equation}
\end{proposition}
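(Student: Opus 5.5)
The plan is a standard first-variation (Lagrange multiplier) argument on each subinterval $[t_k,t_{k+1}]$, with the measurement terms treated as interior-point costs. The costate then picks up jumps at the measurement times.

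\textbf{Step 1: adjoin the dynamics.} Write the augmented dynamics \eqref{eq:estimated_augmented_dynamics} as $\dot{\tilde z}=F(\tilde z,\hat\omega,t)$. Choose the sign convention so that $H=p^\top F-\tfrac{\chi}{2}\norm{\hat\omega}^2$ matches the paper. Form the augmented functional
\[
\bar J = -J + \int_{t_0}^{t_N} p^\top\bigl(F(\tilde z,\hat\omega,t)-\dot{\tilde z}\bigr)\,\dif t .
\]
This is the maximization form consistent with \eqref{eq:optimal_unknown_inputs}. Allow $p$ to be piecewise $C^1$ with possible jumps only at the $t_k$.

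\textbf{Step 2: integrate by parts and perturb.} On each open interval $(t_k,t_{k+1})$, integrate $-\int p^\top\dot{\tilde z}$ by parts. This produces boundary terms of the form $p(t_k^+)^\top\tilde z(t_k)-p(t_{k+1}^-)^\top\tilde z(t_{k+1})$. Collecting them at each $t_k$ gives $\bigl(p(t_k^+)-p(t_k^-)\bigr)^\top\tilde z(t_k)$, with $p(t_0^-)$ and $p(t_N^+)$ defined as $0$. Next, perturb $\hat\omega^\star$ by $\epsilon\delta\omega$ and $\tilde z_0$ by $\epsilon\delta z_0$. The state trajectory is continuous and differentiable in $\epsilon$, since $F$ is smooth and the perturbed control remains piecewise continuous. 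The first variation is
\[
\delta\bar J=\sum_k\Bigl[-\tfrac12\nabla_{\tilde z}\norm{h(\hat z^\star(t_k))-\mathsf y_k}^2+p(t_k^+)-p(t_k^-)\Bigr]^\top\delta\tilde z(t_k)
+\int\bigl[(\nabla_{\tilde z}H+\dot p)^\top\delta\tilde z+\nabla_{\hat\omega}H^\top\delta\omega\bigr]\dif t .
\]

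\textbf{Step 3: choose $p$ and read off the conditions.} Choose $p$ backward from $p(t_N^+)=0$. On each subinterval it solves the linear adjoint ODE $\dot p=-\nabla_{\tilde z}H$, which is well-posed, and it is given the jumps \eqref{eq:costate_jump} at each $t_k$. This kills every $\delta\tilde z$ term except the one at $t_0$. That term is $p(t_0^-)^\top\delta z_0$, after bookkeeping of the $t_0$ bracket together with the $\tilde z_0$ freedom. Since $\tilde z_0$ is free, stationarity requires $p(t_0^-)=0$; this is the second condition in \eqref{eq:costate_boundary_conditions}. The remaining variation is $\int\nabla_{\hat\omega}H^\top\delta\omega\,\dif t=0$ for all piecewise continuous $\delta\omega$. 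The fundamental lemma then gives $\nabla_{\hat\omega}H=[p_v,p_u]^\top-\chi\hat\omega=0$. Because $H$ is strictly concave in $\hat\omega$ ($\chi>0$) and $\hat\omega$ ranges over unconstrained $\R^2$, this stationary point is the pointwise maximizer \eqref{eq:optimal_unknown_inputs}. Finally, the state equation is simply $\nabla_pH=F$.

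\textbf{Main obstacle.} The delicate part is the bookkeeping at the measurement instants, especially $t_0$. There the free initial condition and the measurement cost at $k=0$ interact, and the sign convention must be chosen so that \eqref{eq:costate_jump} comes out with the stated sign.

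Nontriviality $p\not\equiv 0$ needs a separate argument, since an abnormal multiplier cannot arise: the problem has no terminal constraints. One possibility is to argue that if $p\equiv0$ then every measurement residual gradient vanishes, which reduces to a degenerate exact-fit case. In that case one can rescale with the normal multiplier, or simply note that the statement is then vacuous. I would first try to handle this by appeal to the normal-case PMP in \cite{liberzon2011calculus}.
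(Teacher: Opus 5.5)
Your derivation of \eqref{eq:hamilton_equations}, the jump and boundary conditions, and \eqref{eq:optimal_unknown_inputs} is correct. It takes a different route from the paper, which simply invokes the maximum principle with interior-point costs at the sample times \cite[Theorem 2.2]{dey2014control}. You instead compute the first variation directly. You adjoin the dynamics with a piecewise $C^1$ multiplier, integrate by parts on each $(t_k,t_{k+1})$, and choose $p$ backward to cancel every $\delta\tilde z$ term. This leaves $\delta J=-p(t_0^-)^\top\delta\tilde z_0-\int\nabla_{\hat\omega}H^\top\delta\omega\,\dif t$. Your argument only yields stationarity $\nabla_{\hat\omega}H=0$, not the maximum condition. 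Nothing is lost here, because $\hat\omega$ is unconstrained and $H$ is strictly concave in it. The PMP citation, however, would survive input constraints or a non-concave $H$, and yours would not. In exchange, your route is self-contained and shows that $p(t_0^-)=-\nabla_{\tilde z_0}J$ and that $\nabla_{\hat\omega}H$ is the $L^2$-gradient of $-J$. That is exactly why \eqref{eq:control_update}--\eqref{eq:initial_state_update} are descent steps on $J$. One notational fix: in Step 2 do not set $p(t_0^-)=0$. The boundary term there is $p(t_0^+)^\top\delta\tilde z_0$; $p(t_0^-)$ is then defined by the $k=0$ jump rule, and its vanishing is the conclusion, not a convention.

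Your nontriviality paragraph does not work, and no argument can make it work. The adjoint equation is linear and homogeneous in $p$, and $p(t_N^+)=0$. So if all residuals $h(\hat z^\star(t_k))-\mathsf y_k$ vanish, all jumps vanish and the conditions force $p\equiv0$. This happens exactly when $J^\star=0$, for example with noise-free data from a pursuee with constant $v_1,u_1$. In that case a minimizer exists, so the proposition is not vacuous, yet its conclusion $p\not\equiv0$ is false. Rescaling the normal multiplier only rescales $p$, which stays zero, and taking $p_0=0$ makes $(p_0,p)$ trivial. Conversely, if $J^\star>0$, then either some residual is nonzero, giving a nonzero jump, or $p_{v,u}=\chi\hat\omega^\star\not\equiv0$. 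What the PMP actually delivers is nontriviality of the pair $(p_0,p)=(-1,p)$, which is automatic in this normal problem; $p\not\equiv0$ itself holds if and only if $J^\star>0$. The paper's one-line citation glosses over this as well. You should prove this equivalence and flag the overstatement, rather than try to establish $p\not\equiv0$ unconditionally.
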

\smallskip
\begin{proof}
    It is a standard application of PMP, augmented with cost functions at discrete times~\cite[Theorem 2.2]{dey2014control}. 
\end{proof}

\vspace*{-3pt}
\subsection{Numerical algorithms}
\label{subsec:forward_backward}

In this section, the numerical algorithms used for solving the optimization problem~\eqref{eq:optimal_estimation_problem} and implementing it in an online fashion are described.

\subsubsection{Forward-backward Algorithm}
The optimal control problem~\eqref{eq:optimal_estimation_problem} is solved numerically using an iterative forward-backward algorithm~\cite{chang2020energy}. During the $j$-th iteration, suppose the estimates of the optimal controls and initial state are denoted as $\hat{\omega}^{(j)}(t)$ and $\tilde{z}_0^{(j)}$, respectively. Then, the internal augmented state trajectory $\tilde z^{(j)}(t)$ 
is obtained by integrating the state dynamics~\eqref{eq:estimated_augmented_dynamics} forward in time. 
The corresponding costate trajectory $p^{(j)}(t)$ is then obtained by integrating the costate dynamics~\eqref{eq:hamilton_equations} backward in time from the terminal condition at $t = t_N$ \eqref{eq:costate_boundary_conditions}, with the jump conditions \eqref{eq:costate_jump} applied at each observation time~$t_k, k = N, N-1, \cdots, 0$.

The decision variables, control $\hat{\omega} (t)$ and initial state $\hat{z}_0$, are updated in the $(j+1)$-th iteration to satisfy the pointwise maximization of the control Hamiltonian~\eqref{eq:optimal_unknown_inputs} and the costate boundary condition at $t=t_0$~\eqref{eq:costate_boundary_conditions}, respectively, as
\begin{subequations}
    \begin{align}
        \hat\omega^{(j+1)}(t) &= \hat\omega^{(j)}(t) 
    + \zeta_{\hat\omega} \nabla_{\hat\omega} H \left( \tilde z^{(j)}(t), p^{(j)}(t),\hat\omega^{(j)}(t), t \right)
\label{eq:control_update} \\
        \tilde z_0^{(j+1)} &= \tilde z_0^{(j)} + \zeta_{\tilde z_0} p(t_0^-) 
\label{eq:initial_state_update}
    \end{align}
\end{subequations}
where $\zeta_{\hat\omega}, \zeta_{\tilde z_0}>0$ are small step size parameters. Explicit formulas for the update \eqref{eq:control_update}, 
along with other additional details appear in Appendix~\ref{appdx:pmp_details}.



\subsubsection{Online implementation using moving horizon}
The forward-backward algorithm described above provides a numerical solution to the estimation problem. However, the algorithm is iterative and may require many forward and backward sweeps before convergence. This is not a significant limitation for an offline reconstruction problem, but it becomes challenging when we seek to use these estimates for online feedback. Repeatedly solving \eqref{eq:optimal_estimation_problem} from an arbitrary initial guess would therefore be impractical for real-time implementation.

To obtain an online estimator, we restrict the optimization in a moving-horizon. At measurement time $t_k$, the problem is solved using the most recent set of observations $\mathcal{Y}_k=\{(t_j,\mathsf{y}_j)\}_{j=k-N_h}^{k}$ over the interval $[t_{k-N_h},t_k]$, where $N_h$ is number of samples allowed in the considered horizon. When a new observation becomes available, the estimation window is shifted forward by one step and the solution from the preceding window is used to initialize the new solution. The overall closed-loop implementation is summarized in Fig.~\ref{fig:block_diagram}, with pseudo-code implementation shown in Algorithm~\ref{alg:IME_fb}.

\begin{remark}
    The moving horizon approach of the proposed estimator is closely related to model predictive control (MPC) \cite{rakovic2019handbook}. In MPC, an optimal control problem is repeatedly solved over a predictive window using current state estimates, and this window is shifted forward as time evolves. Here, an optimal control problem is repeatedly solved using a receding window using past measurements, and the window is likewise shifted forward as new data becomes available. 
\end{remark}
\begin{algorithm}[t]
    \begin{algorithmic}[1]
        \Require Measurements 
        $(t_k,\mathsf{y}_k)$, known controls $\eta_2$
        \Ensure Estimated internal state $\tilde z$
            \State Update the moving window
            $\mathcal{Y}_k=\{(t_i,\mathsf{y}_i)\}_{i=\max(0,k-N_h)} ^{k}$
            \State Initialize $\tilde z_0^{(0)}$ and $\hat\omega^{(0)}$ using the previous solution 
            \For{$j=0$ to MaxIter}
                \State Integrate $\tilde z^{(j)}$ forward over $\mathcal{Y}_k$ using \eqref{eq:estimated_augmented_dynamics}
                \State Integrate $p^{(j)}$ backward using~\eqref{eq:hamilton_equations} and \eqref{eq:transversality_conditions} 
                \State Update $\hat\omega^{(j)}$ using \eqref{eq:control_update}
                \State Update $\tilde z_0^{(j)}$ using \eqref{eq:initial_state_update}
            \EndFor
        \State Store current solution $\tilde z$ and $\hat{\omega}$ 
        \State \Return $\tilde z$
    \end{algorithmic}
    \caption{Internal Model-based Estimator (IME)}
    \label{alg:IME_fb}
\end{algorithm}
    \vspace*{-5pt}

\section{Numerical and Experimental Results} \label{sec:results}

Numerical simulations and physical experiments are used to quantify both estimation accuracy and control execution of the proposed estimator. The code used for all simulations and experiments is available at \cite{icra2027_code}.

\subsection{Bioinspired feedback controls for pursuit}
While the proposed estimator is applicable to any feedback control laws of the form~\eqref{eq:pursuit_general}, this paper focuses on two specific bioinspired pursuit strategies for the evaluation. A brief introduction to these pursuit strategies is provided next.  

\smallskip
\noindent
{\it Shape variables.}
For consistency with pursuit literature, let us represent the relative state $z$ using equivalent \textit{shape variables} consisting of the inter-agent distance $\rho$ and the relative bearing angles $\alpha_1,\alpha_2$ (Fig.~\ref{fig:formation}(a)): $\rho = \sqrt{x^2+y^2}, \alpha_2 = \tan^{-1}\big(\frac{y}{x}\big)$, and  $\alpha_1 = \alpha_2+\pi-\theta \pmod{2\pi}$. The relative dynamics~\eqref{eq:relative_dynamics} can equivalently be expressed in these coordinates (see e.g., \cite[eq. (3)]{li2026feedback}). Notice that the observation model~\eqref{eq:continuous_output} can be equivalently represented by $[\rho, \, \alpha_2]^\top$.

\subsubsection{Constant Bearing (CB)}
Constant-bearing (CB) pursuit regulates the line-of-sight bearing of the pursuer (agent~2) relative to the pursuee
(agent~1) toward a prescribed constant angle $\phi$ \cite{galloway2013symmetry}. The CB pursuit law is obtained by setting the speed control $v_2$ as a constant and the steering control as
\begin{equation}
   u_2 = u_{CB} = \mu\sin(\alpha_2 - \phi) + \frac{1}{\rho}(v_1 \sin\alpha_1 + v_2\sin\alpha_2)
    \label{eq:control_CB}
\end{equation}
where $\mu>0$ is the feedback gain. A key feature of this law is that it makes the manifold $\mathcal{M}_{\mathrm{CB}}
:=
\{(\rho,\alpha_1,\alpha_2):\alpha_2=\phi\}$ invariant and attractive.

\subsubsection{Mutual Motion Camouflage (MMC)}

Motion camouflage is a pursuit strategy, observed in dragonflies and falcons, in which a pursuer moves so that its bearing from the target remains approximately constant, thereby reducing the apparent transverse motion seen by the target~\cite{glendinning2004mathematics, justh2006steering}. When two agents simultaneously employ motion camouflage while pursuing one another, the resulting coordinated behavior is referred to as mutual motion camouflage
(MMC), a behavior also observed in dragonflies~\cite{mischiati2012dynamics}. 
The pursuit laws for MMC are obtained by setting the speed controls $v_1$ and $v_2$ as constants and the steering controls as
\begin{equation}
    u_1=u_2=u_{\mathrm{MMC}} = -\xi\lambda + k_d\lambda\gamma(E-E_0)
    \label{eq:control_MMC}
\end{equation}
where $\lambda=-(v_1\sin\alpha_1+v_2\sin\alpha_2)$, $\gamma=-(v_1\cos\alpha_1+v_2\cos\alpha_2)$, $E=\rho^2\lambda^2e^{-2\xi\rho}$ is called the energy, and~$\xi, k_d > 0$ are control gains. 
This control law renders the desired energy-level manifold $\mathcal{M}_{\mathrm{MMC}} := \left\{(\rho,\alpha_1,\alpha_2): E=E_0\right\}$ 
invariant and locally attractive~\cite{mischiati2012dynamics}. 

\subsection{Numerical simulation results}
The IME is implemented in MATLAB (\textit{ode45} for integrating all differential equations) using a forward-backward iterative solver for the PMP necessary conditions and the moving horizon approach for online implementation, as outlined in Sec.~\ref{subsec:forward_backward}. 
Relevant parameters are tabulated in Table~\ref{tab:parameters}.

	\begin{table}[h]
		\centering
		\caption{Parameters for IME}
		\hskip-10pt
		\begin{tabular}{clc}
			\hline
			\hline\noalign{\smallskip}
			Parameter & Description & Value \\
			\hline\noalign{\smallskip}
			\multicolumn{3}{c}{Forward-Backward Algorithm}\\\noalign{\smallskip}
    		$\chi$ & Regularization parameter & $2$ \\
    		$\zeta_{\hat\omega},\,\zeta_{\tilde z_0}$
    		& Step sizes & 0.25 \\
    		$\mathrm{MaxIter}$ & Maximum iterations & $10$ \\
			\hline\noalign{\smallskip}
			\multicolumn{3}{c}{Moving Horizon Algorithm}\\\noalign{\smallskip}
            $N_h$ & Horizon length  & $30$ \\
            \hline\noalign{\smallskip}
            \multicolumn{3}{c}{Numerical Simulation \& Experiment}\\\noalign{\smallskip}
    		- 
            & Observation sampling interval [$\mathrm{s}$] & $0.1$ \\

			\hline
		\end{tabular}
        \vspace*{-10pt}
		\label{tab:parameters}
	\end{table}

The proposed IME is compared against an extended Kalman filter (EKF) and a particle filter (PF) \cite{gordon1993particle} under both open-loop estimation and closed-loop pursuit. 
All three estimators reconstruct the relative state $\hat{z}$ and unknown pursuee controls $\hat{\eta}_1$. The EKF and PF use the relative dynamics \eqref{eq:relative_dynamics}, but they assume zero dynamics for $\eta_1$, attempting to estimate it as a relatively constant value. 
For both EKF and PF, zero-mean Gaussian process noise is added with standard deviation $0.05$ [\(\mathrm{m}\)] for \(\rho\), $0.09$ [$\mathrm{rad}$] for both \(\alpha_1\) and \(\alpha_2\), and $2\times10^{-3}$ for both \(v_1\) and \(u_1\), with units of [\(\mathrm{m/s}\)] and [\(\mathrm{rad/s}\)], respectively. Synthetic LiDAR measurements are generated by adding zero-mean Gaussian noise with standard deviations $2.5\times10^{-2}$ [\(\mathrm{m}\)] and \(0.02\) [\(\mathrm{rad}\)] to the ground truth \(\rho\) and \(\alpha_2\). All estimators receive the same measurement noise sequence to keep consistent sensing conditions. 

To compare sensitivity to initialization, the IME and EKF are initialized with randomly drawn initial guesses of the unknown quantities
for 100 trials, while the PF started each trial with a different set of 35,000 particles covering operationally admissible ranges.

\subsubsection{Experiment A} 

The estimators are first evaluated independently of the feedback controller, with agent 2 estimating the relative configuration of agent 1 (${z}$) together with its controls (${\eta}_1$).

The prescribed open-loop trajectory uses
\(v_1(t)=0.13+0.03\sin(0.30t)\) [\(\mathrm{m/s}\)] and
\(u_1(t)=-0.16+0.03\sin(0.10t)\) [rad/s],
while agent 2 is driven by the constant inputs
\(v_2=0.07\) [\(\mathrm{m/s}\)] and
\(u_2=-0.16\) [\(\mathrm{rad/s}\)].

To quantify the error in the relative configuration estimation, we define
the relative-pose error $e_g :=
    \left\|
        I_3-g^{-1}\hat{g}
    \right\|_F 
    \label{eq:relative_pose_error}$
where \(g\) and \(\hat{g}\) represent the true and estimated relative configurations, respectively, using the \(SE(2)\) representation
introduced in \eqref{eq:relative_group_element}, and $\norm{\cdot}_F$ denotes the Frobenius norm. 
The (normalized) translational and angular velocity estimation errors are defined as\footnote{A small $\epsilon >0$ is added to the denominator to avoid ill-posedness of the error normalizations whenever $v_1$ or $u_1$ becomes zero.} $e_{v_1}
    :=
    \left|\tfrac{v_1 - \hat{v}_1}{v_1}\right|,
    $ and $e_{u_1}
    :=
    \left|\tfrac{u_1 - \hat{u}_1}{u_1}\right|$, respectively.

Figure~\ref{fig:open_loop_errors} compares the open-loop estimation errors for the estimators. The IME maintains the lowest median relative pose error $e_g$ and narrower percentile bounds after the initial transient. The EKF shows the greatest sensitivity to the initial guesses, while the PF mantains a smaller spread but a larger median error than the IME.  
The IME maintains the lowest $e_{v_1}$. For $e_{u_1}$, the IME exhibits a large percentile spread, but the error decreases as the estimation window grows and remains lower than that of the EKF and PF after convergence. The oscillatory behavior in the EKF and PF error curve is consistent with the mistmatch between their constant model for $\eta_1$ and the time-varying target inputs. The IME provides the lowest estimation errors after convergence and shows the least sensitivity to initialization. 

\begin{figure}[t]
    \centering
    \includegraphics[width=\columnwidth]{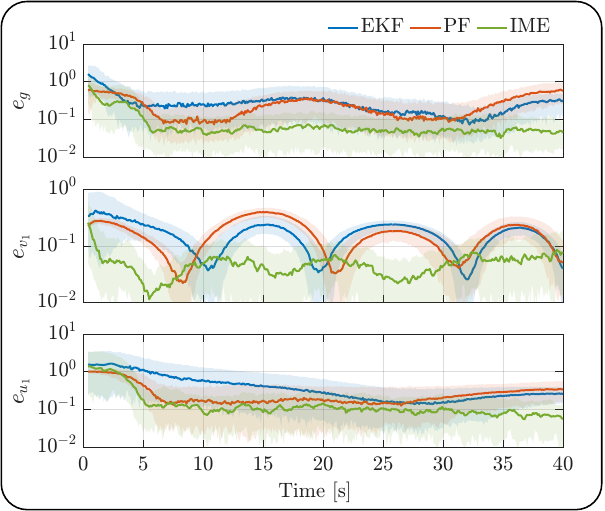}
    \caption{Open-loop estimation performance of the EKF, PF, and IME for experiment A. From top to bottom: \(e_g\), \(e_{v_1}\), and \(e_{u_1}\). Solid curves show medians across initializations; shaded regions show the 10th--90th percentile range.
}
    \vspace*{-15pt}
    \label{fig:open_loop_errors}
\end{figure}

\subsubsection{Experiment B}

We next evaluate the estimators as components of the closed feedback loop
under the constant-bearing pursuit strategy. The feedback law in
\eqref{eq:control_CB} is evaluated using the quantities reconstructed by
each estimator. Consequently,
the EKF, PF and IME each generate their own pursuer control input and
closed-loop trajectory.
The pursuee is driven by the time-varying inputs
$v_1(t)=0.10+0.03\sin(0.30t)$ [\(\mathrm{m/s}\)]
and
$u_1(t)=0.13+0.05\cos(0.30t)$ [\(\mathrm{rad/s}\)],
while the pursuer translational speed is held constant at
$v_2=0.11$ [\(\mathrm{m/s}\)]. The constant-bearing controller uses
\(\mu=2\) and desired bearing \(\phi=20^\circ\).
Each simulation is terminated if the true inter-agent separation reaches $0.25$ [\(\mathrm{m}\)]. 

To quantify achievement of the constant-bearing objective under the feedback, we define $\Lambda_{\mathrm{CB}} (t)
    :=
    \frac{1}{2}
    \left[
        1-\cos(\alpha_2 (t)-\phi)
    \right]$,
where \(\alpha_2\) is the true relative bearing. Thus, 
\(\Lambda_{\mathrm{CB}}=0\) if and only if $\alpha_2 = \phi$ modulo $2\pi$. 

Figure ~\ref{fig:cb_error} compares the resulting closed-loop performance for the three estimators. All three estimators make it possible for the pursuer to quickly reduce the constant-bearing error from its initial value and maintain it near zero throughout the simulation. The IME achieves a lower median constant-bearing error after the initial transient and shows a significantly smaller percentile range over much of the simulation, showing lower sensitivity to initial guesses. The IME also has better reconstruction of the time-varying inputs $v_1$ and~$u_1$. 

\begin{figure}[t]
    \centering
    \includegraphics[width=\columnwidth]{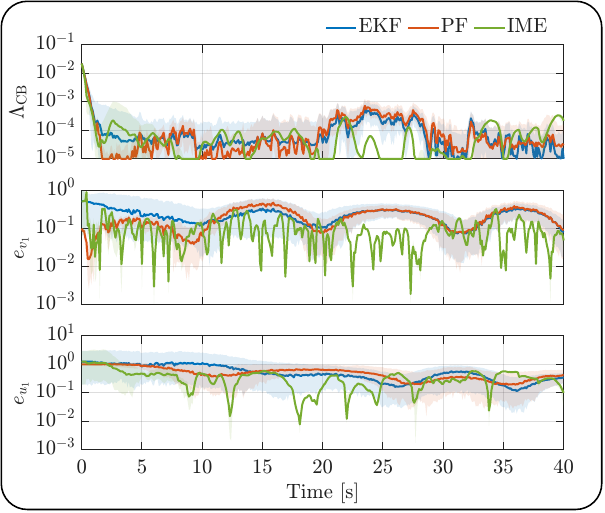}
    \caption{Closed-loop pursuit and estimation performance of the EKF, PF, and IME for experiment B. From top to bottom: \(\Lambda_{\mathrm{CB}}\), \(e_{v_1}\), and \(e_{u_1}\). Solid curves show medians across initializations; shaded regions show the 10th--90th percentile range.
    }
    \vspace*{-15pt}
    \label{fig:cb_error}
\end{figure}


\subsection{Robotic experimental results}

To demonstrate the performance of the IME in closed loop, we implement the constant-bearing (CB) and mutual motion camouflage (MMC) controllers on a pair of TurtleBot3 Burger differential-drive robots~\cite{turtlebot3_manual}. Each robot performs sensing, state estimation, and feedback control locally using its onboard Raspberry Pi. 

Each robot is equipped with two LD-19 LiDARs \cite{youyeetoo_ld19}, mounted at the front and rear and
operating at approximately $10$~Hz, as illustrated in Fig.~\ref{fig:burger}. 
A 3D-printed cylindrical LiDAR identifier is mounted at the geometric center of each robot and serves as the target detected by the LiDARs on the other robot. Because the LiDARs measure within a horizontal scanning plane, the cylindrical geometry provides a consistent cross-sectional return for range and bearing detection.

Each LiDAR provides an effective field of view of approximately
$270^\circ$. The remaining $90^\circ$ sector, directed toward the robot center and its own target pole, is masked in software to prevent self-detection (see Fig.~\ref{fig:burger}). 
The two laser scans are transformed into a common
robot-center frame and merged prior to target-object extraction.
The scan-merging and object-detection stages are implemented using the ROS2 packages in \cite{dual_laser_merger} and
\cite{lidar_object_detection_ros2}, respectively.



Ground truth trajectories are recorded using an OptiTrack motion capture system \cite{optitrack_website} covering approximately $2~[\mathrm{m}]\times2~[\mathrm{m}]$ test area. The motion capture measurements are transmitted through ROS2 and time-aligned with the onboard LiDAR, estimator, and control data. It is to be emphasized that the OptiTrack measurements are used \textit{only} for post-experiment evaluation and are not provided to either the estimator or the controller onboard.

\begin{figure}[!t]
    \centering
    \includegraphics[width=\columnwidth]{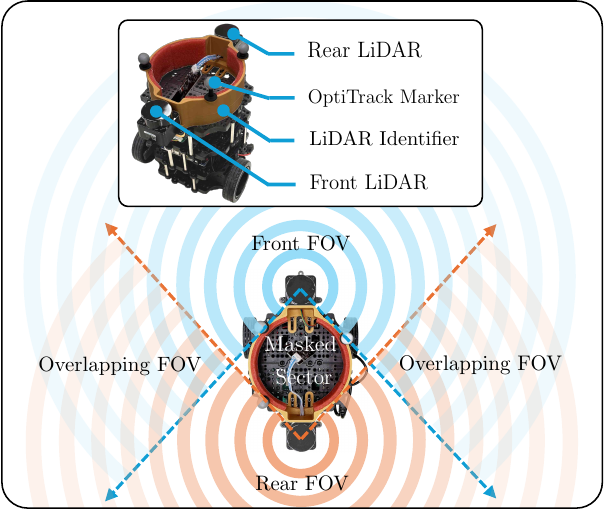}
    \caption{Top view of the TurtleBot3 Burger (side view in the inset) showing the dual LiDAR configuration. The effective fields of view (FOVs) of the front and rear LiDARs are shown in blue and orange, respectively. 
    }
    \vspace*{-20pt}
    \label{fig:burger}
\end{figure}

\subsubsection{Experiment C}
\begin{figure*}[t]
    \centering
    \includegraphics[width=\textwidth, trim = {0 10pt 0 0}, clip = true]{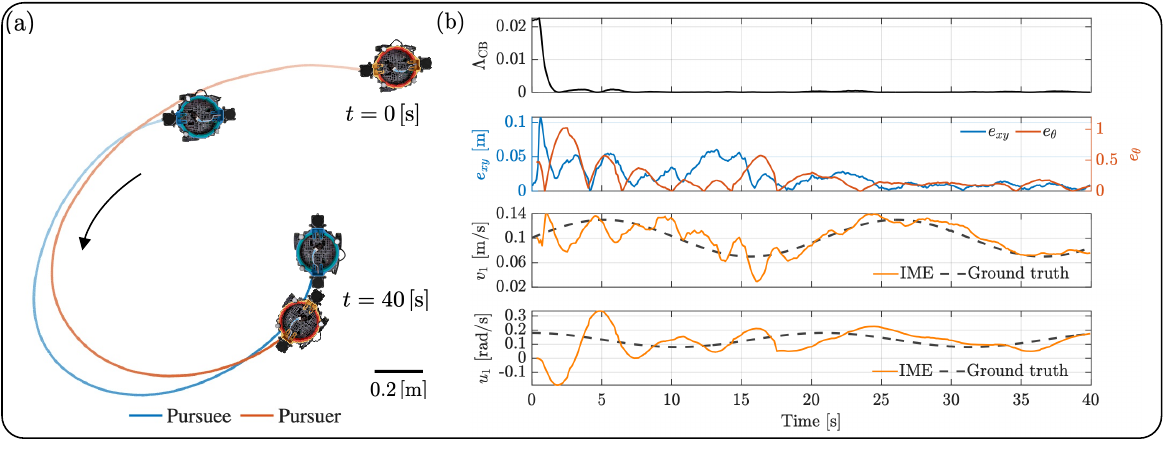}
    \caption{Results for Experiment C. (a) OptiTrack-recorded trajectories of the pursuee (blue) and the pursuer (orange). (b) From top to bottom: constant-bearing error $\Lambda_{\mathrm{CB}}$, geometric relative-pose errors $e_{xy}$ and $e_{\theta}$, pursuee linear speed estimation, and pursuee angular speed estimation. In the $v_1$ and $u_1$ panels, the black dashed lines indicate the commanded inputs of pursuee and the yellow lines indicate the corresponding IME estimates by the pursuer.}
    \vspace*{-15pt}
    \label{fig:exp_cb}
\end{figure*}
To evaluate the performance of IME in a physical closed-loop experiment with continuously varying unknown control inputs, the constant bearing (CB) pursuit strategy is implemented. The robots are initialized at 
$(\rho_0,\alpha_{1,0},\alpha_{2,0})=(1.18,-2.91,0.05)~[\mathrm{m}, \mathrm{rad}, \mathrm{rad}]$.  
Time-varying pursuee controls ($v_1$ and $u_1$) and pursuer linear speed ($v_2$) are kept same as in Experiment B. 
The pursuer steering control ($u_2$) is calculated from~\eqref{eq:control_CB} using the IME estimates. All other parameters are kept the same as in Experiment~B.


Figure~\ref{fig:exp_cb} summarizes the experimental results. As shown in Fig.~\ref{fig:exp_cb}(a), the pursuer manages to intercept the pursuee.
Closed-loop performance and estimator errors are reported in Fig.~\ref{fig:exp_cb}(b). 
Starting from a comparatively large initial
bearing error, the CB performance metric $\Lambda_{\mathrm{CB}}$ decreases to near zero within the
first few seconds and remains close to zero for the remainder of the experiment, indicating that the pursuer rapidly establishes and maintains the desired CB condition.

The geometric estimation error $e_g$ defined in Sec.~\ref{eq:relative_pose_error} is separated into its translational and angular components, $e_{xy}=
\sqrt{(x-\hat x)^2+(y-\hat y)^2}$ and $e_{\theta}=2 \sqrt{1 - \cos(\theta-\hat\theta)}$, which are shown on the left and right vertical axes, respectively, in the second row of Fig.~\ref{fig:exp_cb}(b). The translational error remains relatively small throughout the experiment, whereas the angular component
exhibits a larger error, consistent with the setup of the estimation problem. 



The lower rows of Fig.~\ref{fig:exp_cb}(b) compare the IME estimates $\hat v_1$ and $\hat u_1$ with the corresponding time-varying commands applied to the pursuee. The estimate $\hat v_1$ closely follows the commanded linear speed, while $\hat u_1$ exhibits larger deviations but captures
the overall time-varying trend of commanded steering rate.

\subsubsection{Experiment D}

We further evaluate the proposed estimation framework for the mutual motion
camouflage (MMC) strategy. In contrast to the pursuer-pursuee configuration
considered previously, each agent now performs its own local internal state
estimation and independently generates its steering command using the MMC
controller~\eqref{eq:control_MMC}.
Specifically, agent 1 uses its local LiDAR measurement
$(\rho,\alpha_1)$ to reconstruct $\alpha_2$ and $\eta_2$, whereas agent 2 uses
$(\rho,\alpha_2)$ measurement to reconstruct $\alpha_1$ and $\eta_1$. 


For this experiment, both agents are commanded with the same constant translational speed, $v_1=v_2=0.15$ [\(\mathrm{m/s}\)]. The MMC parameters are selected as $\xi=1.46$ [\(\mathrm{1/m}\)], $k_d=1000$, and 
$E_0=2.84\times10^{-3}$. 

The robot trajectories in Fig.~\ref{fig:mmc}(a) exhibit the characteristic coordinated motion predicted by the theoretical analysis~\cite{mischiati2012dynamics} and previously demonstrated in experiment where global information feedback was available~\cite{halder2015biomimetic}. 
The $(\rho,\gamma)$ phase portrait in Fig.~\ref{fig:mmc}(b) initially follows an outward spiral before approaching the desired orbit corresponding to $E = E_0$.

The closed-loop control performance is quantified by 
$\Lambda_{\text{MMC}} (t)
=
\left|
\tfrac{E_0 - E(t)}{E_0}
\right|$,
where $E(t)$ is the ground truth 
derived from OptiTrack. Starting from approximately $0.8$, $\Lambda_{\text{MMC}}$ decreases and subsequently fluctuates around $0.1$ (Fig.~\ref{fig:mmc}(c)), indicating that the realized closed-loop motion remains near the desired energy level after the initial transient. 
Two characteristic peaks in $\Lambda_{\text{MMC}}$ are observed. The larger peaks occur near the maximum inter-agent separation, while smaller peaks occur near the minimum separation, suggesting that sensing and reconstruction errors vary with the relative geometry.

Finally, since each agent maintains its own observer-centered internal model, Fig.~\ref{fig:mmc}(c) plots the relative-pose errors $e_{g,1}$ and $e_{g, 2}$, each computed between the locally estimated relative configuration and the corresponding OptiTrack-derived ground truth in that agent's body-fixed frame. As shown in Fig.~\ref{fig:mmc}(c), both errors
fluctuate throughout the experiment, but their overall magnitudes remain approximately consistent over the entire experimental duration.
\begin{figure*}[t]
    \centering
    \includegraphics[width=\textwidth]{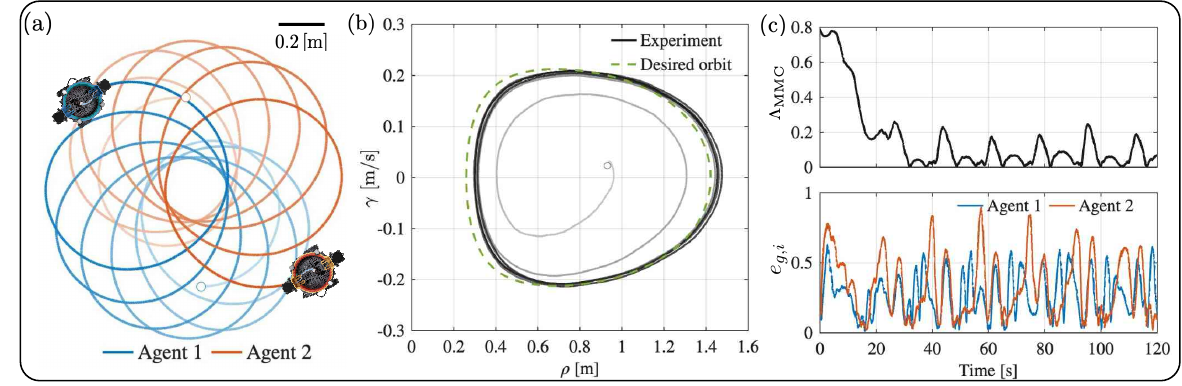}
    \caption{Results for Experiment D. (a) Experimental trajectory using mutual motion camouflage. (b) Phase portrait comparing the experimental trajectory (black) with the desired MMC orbit (green). (c) Geometric relative-pose errors $e_{g,i}$ for Agent~1 (blue) and Agent~2 (orange), and the normalized energy-orbit error $\Lambda_{\mathrm{MMC}}$ computed from OptiTrack ground truth (black).}
    \vspace*{-15pt}
    \label{fig:mmc}
\end{figure*}

\section{Conclusion and future work} \label{sec:conclusion}
This paper develops a bioinspired internal model-based estimator for reconstructing the relative configuration and unknown motion of a planar agent from intermittent range and bearing measurements. The estimator maintains a continuous time model of the relative dynamics and formulates state reconstruction as an optimal control problem. Pontryagin's Maximum Principle is used to derive the necessary optimality conditions, which are solved using a forward-backward algorithm. To enable online estimation, a moving-window approach is used. Numerical simulations demonstrate improved estimation performance relative to conventional estimators, while robotic experiments demonstrate reconstruction of the unmeasured relative heading and target motion from LiDAR measurements. The resulting estimates are further incorporated into feedback laws for constant-bearing pursuit and mutual motion camouflage.

Several directions remain for future work. The nonlinear observability properties of the relative system, including conditions under which the unknown target motion can be uniquely recovered from measurements, will be assessed. The online implementation also motivates further study of computational efficiency, convergence, sensitivity to algorithmic parameters. Extensions to more general target dynamics, noisy measurements, and constraints on the unknown inputs are also of interest. Finally, the framework can be extended to multi-agent and three-dimensional settings.

\vspace{-6pt}
\bibliographystyle{IEEEtran}
\bibliography{bibfiles/halder_papers,bibfiles/reference}

\appendices
\renewcommand{\thelemma}{A-\arabic{section}.\arabic{lemma}}
\renewcommand{\thetheorem}{A-\arabic{section}.\arabic{theorem}}
\renewcommand{\theequation}{A-\arabic{equation}}
\renewcommand{\thedefinition}{A-\arabic{definition}}
\setcounter{lemma}{0}
\setcounter{theorem}{0}
\setcounter{equation}{0}

\vspace*{-5pt}
\section{Details of the PMP Necessary Conditions}
\label{appdx:pmp_details}
The costate dynamics in~\eqref{eq:hamilton_equations} are expanded as
\begin{align}
\dot p_x &= u_2p_y, \quad
\dot p_y = -u_2p_x, \quad
\dot p_\theta = \hat v_1(p_x\sin\hat\theta-p_y\cos\hat\theta)
\nonumber\\
\dot p_v &= -p_x\cos\hat\theta-p_y\sin\hat\theta, \qquad
\dot p_u = -p_\theta
\end{align}
The forward-backward update rule for the decision variables $\hat{\omega}$~\eqref{eq:control_update} is explicitly written as
\begin{align}
\hat\omega^{(j+1)} &= \hat\omega^{(j)} + \zeta_{\hat\omega} \left(
-\chi\hat\omega^{(j)} +
\begin{bmatrix}
p_v^{(j)}\\
p_u^{(j)}
\end{bmatrix}\right)
\end{align}
The only \textit{non-zero} costate jump conditions in~\eqref{eq:costate_jump} are
\begin{align}
    \begin{bmatrix}
        p_x \\ p_y
    \end{bmatrix} (t_k^-) = 
    \begin{bmatrix}
        p_x \\ p_y
    \end{bmatrix} (t_k^+) -
    \begin{bmatrix}
        \hat{x} - x \\ \hat{y} - y
    \end{bmatrix}, ~ k = 0, 1, \cdots N 
\end{align}
Notice that the error between the estimates and measurements enters the whole framework through these jump conditions.

\end{document}